\title{Argument-based Belief in Topological Structures}
\author{%
  Chenwei Shi \qquad Sonja Smets \qquad Fernando R. Vel{\'a}zquez-Quesada
  \institute{Institute for Logic, Language and Computation\\ Amsterdam, The Netherlands}
  \email{ \{C.Shi, S.J.L.Smets, F.R.VelazquezQuesada\}@uva.nl}
}
\tikzset{
  argument/.style = {
    draw,
    circle,
    thick,
    inner sep=0pt,
    minimum size=6mm
  }
}
\newtheorem{theorem}{Theorem}[section]
\newtheorem{pro}{Proposition}[section]
\newtheorem{lem}{Lemma}[section]
\newtheorem{dfn}{Definition}[section]
\newtheorem{example}{Example}[section]
\newcommand{\lt}{\leftarrowtail}
\DeclareMathOperator{\LFP}{\mathsf{LFP}}
\DeclareMathOperator{\Bel}{\mathit{Bel}}
\DeclareMathOperator{\B}{\mathcal{B}}
\DeclareMathOperator{\NB}{\mathsf{B}}
\DeclareMathOperator{\PB}{\mathbf{B}}
\newcommand{\set}[1]{\{#1\}}
\newcommand{\br}[1]{\llbracket #1\rrbracket}
\newcommand{\leftright}{\ensuremath{\boldsymbol{(\Rightarrow})}}
\newcommand{\rightleft}{\ensuremath{\boldsymbol{(\Leftarrow})}}
\newcommand{\prsubset}{\ensuremath{\boldsymbol{(\subseteq})}}
\newcommand{\prsupset}{\ensuremath{\boldsymbol{(\supseteq})}}
\newenvironment{ap}[1]
{ \smallskip\noindent{\scshape #1} }
{ \smallskip }
\newcommand{\ourparagraph}[1]{\medskip\noindent\textit{#1}}
\newcommand{\shi}[1]{#1}
\newcommand{\fer}[1]{#1}
\begin{document}

\maketitle

\begin{abstract}
  This paper combines two studies: a topological semantics for epistemic notions and abstract argumentation theory. In our combined setting, we use a topological semantics to represent the structure of an agent's collection of evidence, and we use argumentation theory to single out the relevant sets of evidence through which a notion of \emph{beliefs grounded on arguments} is defined. We discuss the formal properties of this newly defined notion, providing also a formal language with a matching modality together with a sound and complete axiom system for it. Despite the fact that our agent can combine her evidence in a `rational' way (captured via the topological structure), argument-based beliefs are not closed under conjunction. This illustrates the difference between an agent's reasoning abilities (i.e. the way she is able to combine her available evidence) and the closure properties of her beliefs. We use this point to argue for why the failure of closure under conjunction of belief should not bear the burden of the failure of rationality.
\end{abstract}

\section{Introduction}

The concept of belief has been extensively studied in philosophy, logic and computer science. The representations range from purely qualitative structures including the relational models with serial, transitive and Euclidean relations (the \textit{KD45} representation in doxastic logic; \cite{hintikka1962,MeyeretAl}), plausibility models \cite{AS2008,Board2002} and topological models \cite{Baltag2013,Baltag2015report,Baltag2016} to quantitative structures including the ranking-based plausibility representation \cite{Spohn1988}, Bayesian models using subjective probability functions and conditional probabilistic spaces designed to represent conditional belief \cite{Baltag2008}.

Despite the success of the early qualitative and quantitative approaches\fer{, most of them} had one important disadvantage: they represent only what the agent believes, without dealing with the reasons, justifications or evidence on which such beliefs are grounded.\footnote{\fer{An important exception are the so called \emph{truth maintenance systems} \cite{doyle:truth:1979,Kleer86} which, inspired by the typical forms of arguments in systems for inference in natural deduction, keep track of \emph{syntactic} justifications.}} To remedy this shortcoming, the last decade witnessed a number of different frameworks that can portray the evidence or justifications on which different epistemic attitudes are based. For instance, the work in \cite{BRS1,BRS2}, combines dynamic epistemic logic with justification logic in the tradition of \cite{Artemov08} and \cite{ArtemovN05} where justifications are syntactic terms. In \cite{BenthemP11} and \cite{BenthemFP14} a semantic approach is adopted, representing evidence as a set of possible worlds so beliefs are defined by the maximally consistent ways in which evidence can be combined (so called \emph{evidence models}). The work in \cite{Baltag2016} follows the latter direction, adding a topological structure which represents the different ways in which the available pieces of evidence can be combined, and then using topological notions to single out relevant sets of \emph{combined} pieces of evidence.

The present contribution follows the mentioned topological approach \cite{Baltag2016}, where we view arguments as special types of evidence that an agent possesses. By combining the topological approach to evidence logic with the tools from abstract argumentation theory \cite{dung1995}, we can single out the sets of pieces of evidence (or arguments) on which a new notion of \emph{belief grounded on arguments} can be defined. This new concept of {\it grounded belief} provides the bridge between an agent's doxastic attitude and the arguments she has at her disposal.  While argumentation theorists such as \cite{dung1995} informally talk about the relation between beliefs and arguments, the formalization of the details has only recently gained more attention (e.g., \cite{schw2012,grossi2014}). The approach in this paper is related to the line of work which studies abstract argumentation theory by using the tools and techniques of modal logic \cite{grossi2013,caminada:logical:2009}, and to the work by \cite{grossi2014} which introduces a notion of justified belief by combining doxastic logic and abstract argumentation theory. Yet in contrast to the work on relational models for modal logic, we do turn to the topological semantics of \cite{Baltag2016}. 

\fer{Due to its semantic nature, one might expect for this paper's notion of grounded belief to be very much like the notions of belief studied in relational and evidence models, but actually this is not the case. Grounded beliefs turn out to be not closed under conjunction, which makes them} more similar to the probabilistic notions of belief as studied in \cite{lenzen:glauben:1980} and \cite{van:eijck:belief:2014}. In the final part of this paper we come back to this analysis and provide a comparison between our concept of grounded belief and its quantitative counterpart. From a philosophical perspective the lack of closure raises a number of interesting issues. It shows how, in our framework, there is a clear difference between the agent's reasoning abilities (i.e., the way she is able to combine her available evidence, represented by the topological structure) and the closure properties of her beliefs (related to the strategy she uses to select the sets of combined evidence on which her beliefs will be based). This relates to a point which is raised by \cite{foley:beliefs:2009} in the context of the Lottery paradox. \cite{foley:beliefs:2009} describes a dividing line between processes of deductive reasoning (which are closed under conjunction) and those which involve beliefs and are not closed under conjunction. While we agree with Foley that there are essential differences in such reasoning patterns, our paper does show how the two can be made formally precise and can moreover be united in a formal framework that defines grounded beliefs on the basis of deductive reasoning processes for combining evidence.

\ourparagraph{Outline.} Section \ref{sec:preli} recalls the work on evidence and belief that our proposal is based on \cite{BenthemP11,Baltag2016}. Section \ref{sec:GAM} introduces our framework, defining our notion of \emph{belief grounded on arguments} and comparing it with the topological notion of evidence-based belief in \cite{Baltag2016}. Section \ref{sec:ABBL} provides a sound and complete logic characterizing the properties of our notion of belief. Section \ref{sec:failclosure} compares our setting with the probabilistic notion of belief, re-examining the criticism faced by notions of belief which are not closed under conjunction. We conclude in Section \ref{sec:conclusions}, listing several directions for future work.

\section{Preliminaries}\label{sec:preli}

In this section we recall the basic concepts and definitions of the framework in \cite{BenthemP11} designed to present evidence logic as well as the further topological development presented in \cite{Baltag2016}.

\ourparagraph{Beliefs in Evidence Models} The evidence an agent has for supporting her beliefs can be represented in different ways. The proposal in \cite{BenthemP11} takes a semantic perspective and uses neighborhood models \cite{Scott1970,Montague1970}, a generalisation of relational models, in which a so-called \emph{neighborhood function} assigns a family of subsets of the domain to each possible world.\footnote{For our purposes, we work only with the so-called \emph{uniform} evidence models in \cite{BenthemP11}, in which the neighborhood function is world-invariant.}\;\footnote{\fer{As Definition \ref{def:evidence_model} shows, the concept of evidence represented in evidence models is rather abstract: a piece of evidence is understood as a (non-empty) set of possible worlds. As it will be discussed, these pieces of evidence can be understood as information the agent receives from external sources (observations, communication), and the closure properties the evidence set might have can be understood as the different ways she can `extract' further information from what she gets. For an in-depth study about the different interpretations the term \emph{evidence} and its `cousin' \emph{justification} might have, the reader is referred to \cite{sep-evidence} (for the first) and \cite{sep-justep-intext} (for the second).}}

\begin{dfn}[Evidence model \cite{BenthemP11}]\label{def:evidence_model}
  An \emph{evidence model} is a tuple $\mathcal{M} = (X,E_0,V)$ where $X \neq \emptyset$ is a set of possible worlds, $E_0 \subseteq 2^X-\{\emptyset\}$ is a family of non-empty sets called \emph{pieces of evidence}, and $V: \mathsf{Prop}\rightarrow 2^W $ is a valuation function for a given set $\mathsf{Prop}$ of atomic propositions. The family $E_0$ is required to include $X$ (i.e. $X \in E_0$).
\end{dfn}

Note that the above definition mentions two explicit constraints on evidence models: $X$ is always in the neighborhood (the whole domain is always a piece of evidence) but $\emptyset$ is never there (evidence per se is never contradictory). As these are the only possible constrains, it can happen that two pieces of evidence in $E_0$ are in conflict with each other (i.e., there might be $e_1, e_2 \in E_0$ such that $e_1 \cap e_2 = \emptyset$). Moreover, in this setting $E_0$ is not required to satisfy any closure property, e.g. closure under finite/arbitrary unions and/or intersections. Still, the lack of closure properties in \emph{the family} of pieces of evidence does not imply that \emph{the agent} is not able to combine the evidence pieces in a meaningful way. Following the terminology in \cite{BFS2016,Baltag2016}, we introduce the notion of a body of evidence:

\begin{dfn}[Body of evidence]\label{def:body_evidence}
  Let $\mathcal{M} = (X,E_0,V)$ be an evidence model. A \emph{body of evidence $F \subseteq E_0$} is a subfamily of $E_0$ which has the finite intersection property.\footnote{A family $F \subseteq 2^X$ has the \emph{finite intersection property} when the intersection of every finite subset of $F$ is non-empty.} A body of evidence is maximal if it cannot be properly extended to any other body of evidence.
\end{dfn}

Next we indicate when a body of evidence supports a proposition $P$ \cite{BFS2016,Baltag2016}:

\begin{dfn}[Support of a proposition]\label{def:support}
  Let $\mathcal{M} = (X,E_0,V)$ be an evidence model. A \emph{body of evidence $F \subseteq E_0$} supports a proposition $P$ if and only if $\bigcap F \subseteq P$.
\end{dfn}

The notion of belief \`a la van Benthem and Pacuit is based on the agent's body of evidence. In \cite{BenthemP11}, an agent \emph{believes a proposition $P \subseteq X$} if and only if every \emph{maximal} body of evidence $F \subseteq E_0$ supports $P$ (i.e., $\bigcap F \subseteq P$). According to this definition in evidence logic, the agent is able to combine her available evidence in a maximally consistent way. Yet as explicitly indicated in \cite{Baltag2016}, this does not mean that all her beliefs will be consistent. Indeed, while the mentioned concept of belief works well in the finite case, it can yield inconsistent beliefs in the infinite case:

\begin{example}
  Consider the evidence model $(\mathbb{N}, E_0 = \set{[n,+\infty)\mid n\in \mathbb{N}},\emptyset)$. Note how $E_0$ itself is a body of evidence and, moreover, is the unique maximal one. But $\bigcap E_0 = \emptyset$, and thus the agent believes $\emptyset$.
\end{example}


Such examples have been the main motivation in \cite{Baltag2016} to provide an improved semantics for evidence-based beliefs which changes the above given definition of van Benthem and Pacuit. Their revised concept of belief is provided in the context of a topological semantics:

\ourparagraph{Topological Semantics for Beliefs} The \emph{topological} approach of \cite{Baltag2016} generalizes the setting of \cite{BenthemP11}, as it introduces a concept of belief which is always consistent. Moreover, the definitions it provides for knowledge and other related epistemic notions have both epistemic and topological significance. We limit ourselves here to the basic doxastic concept:

\begin{dfn}[Topological evidence model \cite{Baltag2016}]\label{def:topo_model}
  A \emph{topological evidence model} $\mathcal{M} = (X, E_0, \tau_{E_0}, V)$ extends an evidence model $(X, E_0, V)$ (Definition \ref{def:evidence_model}) with $\tau_{E_0}$, the topology over $X$ generated by $E_0$.\footnote{A \emph{topology} over $X \neq \emptyset$ is a family $\tau \subseteq 2^X$ containing both $X$ and $\emptyset$, and closed under finite intersections and arbitrary unions. The elements of a topology are called \emph{open sets}. The \emph{topology generated by $E \subseteq X$} is the smallest topology $\tau_E$ over $X$ such that $E\subseteq \tau_E$.} For simplicity, and when no confusion arises, $\tau_{E_0}$ will be denoted simply by $\tau$.
\end{dfn}

A topological evidence model extends an evidence model with a topology: a structure describing the different ways in which the available pieces of evidence can be combined. \shi{Note that pieces of evidence are modeled as opens in the topology; since a topology is closed under \emph{finite} intersection, this fits our intuition about the agent's limited ability for combining evidence. Further reasons for modeling evidence as opens can be found in \cite[Chapter 2]{Vickers1989}, \cite{Kelly1996} and \cite{Baltag2016}.} 
In this setting, Baltag et al. provide an improved version of evidence-based belief:

\begin{dfn}[Evidence-based belief \cite{Baltag2016}] \label{def:topo_model_belief}
  Let $\mathcal{M}$ be a topological evidence model $(X, E_0, \tau, V)$. The agent \emph{believes a proposition $P \subseteq X$} if and only if every non-empty open set can be `strengthened' into an open set supporting $P$ (i.e., for any $t \in \tau \setminus \set{\emptyset}$ there is $t' \in \tau \setminus \set{\emptyset}$ such that $t' \subseteq t$ and $t' \subseteq P$).
\end{dfn}

This notion of belief, which we call \emph{evidence-based belief} ($\Bel$), coincides with the concept of belief \`a la van Benthem and Pacuit in evidence models when the domain is finite. However, $\Bel$ does remain consistent in the infinite case (i.e., in topological evidence models, $\Bel \emptyset = \emptyset$).

{\smallskip}

Based on these ingredients, the next section will extend the topological evidence models by adding tools from formal argumentation theory \cite{dung1995}.

\section{Topological Argumentation Models}\label{sec:GAM}

Recall how, in the above evidence models of \cite{BenthemP11}, pieces of evidence may be in conflict (i.e., there may be $e_1, e_2 \in E_0$ such that $e_1 \cap e_2 = \emptyset$). This is a natural assumption, as an agent may collect different pieces of evidence (from, e.g., different sources) supporting contradictory facts. The crucial issue is then to find reasonable ways for combining these pieces of evidence or, in other words, to single out meaningful subsets of $E_0$ that can be used to define a notion of belief. In the above mentioned approach, this role is played by maximal bodies of evidence.

Something similar happens in the topological approach of \cite{Baltag2016}: even though \emph{combined} pieces of evidence may be in conflict (there may be $t_1, t_2 \in \tau$ such that $t_1 \cap t_2 = \emptyset$), the authors singled out those whose complement is \emph{nowhere dense} in order to define their topological notion of evidence-based belief.\footnote{See \cite{Baltag2016} for the definition of \emph{dense}, \emph{nowhere dense} and other topological notions, and also for their epistemic interpretation.} This suggests that different doxastic concepts can be provided by singling out alternative (but, of course, reasonable) combined pieces of evidence.

{\medskip}

Let us now focus on specific pieces of evidence that we can think of as examples of an agent's arguments. As readers familiar with the abstract argumentation theory of \cite{dung1995} know, its framework and the various concepts that are defined upon it, can be understood as a collection of tools that allow us to single out acceptable arguments (in our context, amounting to `acceptable' pieces of evidence) from all given arguments. In order to combine such tools with the semantic setting of topological evidence models, one only needs to provide a reasonable definition of what it means for a piece of evidence (or `argument') to attack another, where a piece of evidence is represented as a set of worlds. It is natural to say that two pieces of evidence attack each other when they are in conflict, that is, when their intersection is empty. For example, in a given topological evidence model,
\begin{center}
  for any $t_1, t_2 \in \tau \setminus \set{\emptyset}$: $t_1$ and $t_2$ \emph{attack each other} iff $t_1 \cap t_2 = \emptyset$.
\end{center}
Note that in this example the attack is mutual (i.e., the attack relation is symmetric), yet that does not always need to be the case.

\ourparagraph{From Conflict to Attack} In this paragraph we use an attack relation that is not necessarily symmetric, but it is still related to the mentioned notion of conflict.

\begin{dfn}[Topological argumentation model]\label{def:arg_model}
  A \emph{topological argumentation model} $\mathcal{M} = (X, E_0, \tau_{E_0},$ $\lt, V)$ extends a topological evidence model $(X, E_0, \tau, V)$ (Definition \ref{def:topo_model}) with a relation ${\lt} \subseteq (\tau \times \tau)$, called the \emph{attack relation} on $\tau$ (with $t_1 \lt t_2$ read as \emph{``$t_2$ attacks $t_1$''}), which is required to satisfy the following conditions:
  \begin{enumerate}
    \item for any $t_1, t_2 \in \tau$: $t_1 \cap t_2 = \emptyset$ if and only if $t_1 \lt t_2$ or $t_2 \lt t_1$;

    \item for any $t, t_1, t'_1,  \in \tau$: if $t_1 \lt t$ and $t'_1 \subseteq t_1$, then $t'_1 \lt t$;

    \item for every $t \in \tau \setminus \set{\emptyset}$: $\emptyset \lt t$ and $t \not\lt \emptyset$.
  \end{enumerate}
\end{dfn}

The first condition states that attack implies conflict (right to left), but also that, while conflict implies attack, the implied attack does not need to be mutual (left to right). The second condition asks that, if $t$ attacks $t_1$, then it should also attack any stronger $t'_1$. The last condition establishes that the empty set is attacked by all non-empty opens, and that it does not attack any of them.\footnote{In fact, as the first condition implies, it only attacks itself.}

\subsection{Grounded Semantics for Argument-based Belief}

The attack relation $\lt$ defines, together with its domain $\tau$, an attack graph $\mathcal{A}_\tau = (\tau,\lt)$. Given such a structure, it is possible to use the tools from abstract argumentation theory in order to single out `good' families of combined evidence (or arguments), which in turn will allow us to define a notion of belief grounded in arguments. In order to do that, here is first a quick recollection of the basic concepts of argumentation theory put forward in our framework:

\begin{dfn}[Characteristic (defense) function]
  Let $\mathcal{M}$ be a topological argumentation model, and let $\mathcal{A}_\tau = (\tau, \lt)$ be the attack graph on $\tau$. A subset $T \subseteq \tau$ is said to \emph{defend} $t \in \tau$ if and only if any open $t'$ attacking $t$ (i.e, for all $t' \in \tau$ such that $t \lt t'$) is itself attacked by some open in $T$ (i.e., there is $t'' \in T$ such that $t' \lt t''$). The \emph{characteristic function of $\mathcal{A}_\tau$}, denoted by $d_\tau$ and also called the \emph{defense function}, receives a set of opens $T \subseteq \tau$ and returns the set of opens that are defended by $T$:
  \[ d_\tau(T) := \set{t \in \tau \mid t \text{ is defended by } T} \]
  When $t \in d_\tau(T)$, it is said that $t$ is \emph{acceptable} with respect to $T$.
\end{dfn}

The characteristic function $d_\tau$ is monotonic \cite[Lemma 19]{dung1995}, and therefore it has a least fixed point $\LFP_\tau$ (i.e., $\LFP_\tau$ is the smallest subset of $\tau$ satisfying $\LFP_\tau = d_\tau(\LFP_\tau)$). Note, then, how this least fixed point is a set that can defend all ($\subseteq$) and only ($\supseteq$) its members against any attack. Moreover, it can be proved that the set is also conflict-free (i.e., there are no $t, t' \in \LFP_\tau$ such that $t \lt t'$). Due to these properties, $\LFP_\tau$ provides an excellent candidate for the relevant family of open sets in $\tau$ over which beliefs can be defined. (In abstract argumentation, $\LFP_\tau$ is called \emph{the grounded extension}.) There are, of course, other alternatives on which we can base beliefs: abstract argumentation theory defines also, e.g., the notions of an \emph{admissible set} (a set that is conflict-free and all its arguments are acceptable with respect to it), a \emph{complete set} (one that is admissible and contains every acceptable argument with respect to it), a \emph{preferred set} (a maximal admissible set) and a \emph{stable set} (a conflict-free set that attacks every argument not in it). One of the reasons why the grounded set has been chosen is that, while the other alternatives provide more than one set (there is, in general, more than one admissible/complete/preferred/stable set), the grounded set is always unique\fer{; this guarantees that beliefs will be consistent, a property that is typically desirable.}\footnote{\fer{Of course, consistency can be also achieved when some of the other mentioned sets are used, but this would require the use of an external mechanism, and that might not be desirable for simplicity reasons.}}  One possible drawback of the grounded extension is that, in general, it may be empty; however, this is not the case in our framework, as $X$ is never attacked (it is in conflict only with the empty set, which does not attack anybody) and thus it is always in $\LFP_\tau$. \shi{Moreover, the grounded extension is closely related to the topological notion of belief defined in \cite{Baltag2016}, as we will show later.} Thus we propose the following definition of belief grounded on arguments:

\begin{dfn}[Grounded Belief]\label{def:arg_model_bef}
  Let $\mathcal{M}$ be the topological argumentation model $(X, E_0, \tau_{E_0}, \lt, V)$. The agent \emph{believes a proposition $P \subseteq X$} (notation: $\B P$) if and only if there is an open set in $\LFP_\tau$ supporting $P$, that is
  \[ \B P \quad\text{if and only if}\quad \exists f \in \LFP_\tau: f \subseteq P \]
\end{dfn}

\shi{The definition says that the agent has grounded belief of $P$ if and only if the agent has a ``good" argument for $P$ which is a member of the least fixed point.}

\ourparagraph{Properties of grounded beliefs} From its definition (Definition \ref{def:arg_model_bef}) we observe that grounded beliefs are upward-closed ($\B P$ and $P \subseteq Q$ imply $\B Q$). But a stronger claim also holds: $\LFP_\tau$ itself is closed upwards.

\begin{pro}\label{pro:LFPupwards}
  Given a topological argumentation model, if $f \in \LFP_\tau$ and $f'\in \tau$ is such that $f \subseteq f'$, then $f' \in \LFP_\tau$.
\end{pro}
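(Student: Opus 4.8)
The plan is to exploit the fixed-point characterization of the grounded extension. Since $\LFP_\tau = d_\tau(\LFP_\tau)$, it suffices to show that $f' \in d_\tau(\LFP_\tau)$, i.e., that $f'$ is defended by $\LFP_\tau$. Unfolding the definition of the defense function, the goal becomes: for every $t' \in \tau$ that attacks $f'$ (i.e., $f' \lt t'$), there is some $t'' \in \LFP_\tau$ that attacks $t'$ (i.e., $t' \lt t''$).

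First I would convert the attack on $f'$ into a statement about $f$ using the conflict condition. Fix an arbitrary attacker $t'$ of $f'$. By the first condition of Definition \ref{def:arg_model} (attack implies conflict), $f' \lt t'$ gives $f' \cap t' = \emptyset$. Since $f \subseteq f'$, this yields $f \cap t' \subseteq f' \cap t' = \emptyset$, so $f \cap t' = \emptyset$. Applying the first condition again (now conflict implies attack), I obtain $f \lt t'$ or $t' \lt f$.

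Then I would split into these two cases. If $t' \lt f$, then $f$ itself attacks $t'$, and since $f \in \LFP_\tau$ by hypothesis, $f$ serves as the required defender (take $t'' := f$). If instead $f \lt t'$, then $t'$ attacks $f$; but $f \in \LFP_\tau = d_\tau(\LFP_\tau)$ means $f$ is defended by $\LFP_\tau$, so there is $t'' \in \LFP_\tau$ with $t' \lt t''$, which is exactly the defender needed. In both cases $\LFP_\tau$ defends $f'$ against $t'$; as $t'$ was arbitrary, $f'$ is defended by $\LFP_\tau$, hence $f' \in d_\tau(\LFP_\tau) = \LFP_\tau$.

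I expect the main subtlety to lie in the case analysis on the direction of the attack rather than in any heavy computation: because the attack relation need not be symmetric, the conflict between $f$ and $t'$ may be realized either as $t'$ attacking $f$ or as $f$ attacking $t'$, and the argument must cover both. It is worth noting that the downward-inheritance condition (condition 2 of Definition \ref{def:arg_model}) is not needed here; only the conflict--attack correspondence (condition 1) together with the fixed-point equation do the work.
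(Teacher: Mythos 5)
Your proof is correct, and it shares the overall skeleton of the paper's argument: fix an arbitrary attacker $t'$ of $f'$, produce a counter-attacker in $\LFP_\tau$, and conclude $f' \in d_\tau(\LFP_\tau) = \LFP_\tau$ via the fixed-point equation. The difference is in how the attack on $f'$ is transferred down to $f$. The paper invokes condition 2 of Definition \ref{def:arg_model} (attacks are inherited by open subsets): from $f' \lt t'$ and $f \subseteq f'$ it obtains $f \lt t'$ in a single step, after which the defense of $f \in \LFP_\tau$ immediately yields the counter-attacker, with no case analysis. You instead route through condition 1: attack implies conflict, the conflict $f' \cap t' = \emptyset$ shrinks to $f \cap t' = \emptyset$, and conflict implies an attack in at least one direction, which forces your two-case split --- either $f$ itself is the defender ($t' \lt f$), or the defense of $f$ supplies one ($f \lt t'$). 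Your route is slightly longer, but it buys a genuine strengthening that the paper's proof does not give and that you correctly flag: upward closure of $\LFP_\tau$ needs only the conflict--attack correspondence of condition 1 together with $\LFP_\tau = d_\tau(\LFP_\tau)$, so the proposition survives even in structures where the downward-inheritance condition 2 (and condition 3) fails. One cosmetic remark: the paper treats the situation where $f'$ has no attackers as an explicit separate case, whereas you absorb it vacuously into the universal quantification over attackers; that is perfectly fine here.
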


However, grounded beliefs are not closed under conjunction. To illustrate this fact, consider the topological argumentation model
\begin{equation}\label{equation:1}
\mathcal{M} = ( \set{1,2,3}, E_0 = \set{ \set{1}, \set{2}, \set{3} , \set{1,2}, \set{2,3}}, \tau = 2^X, \lt ,V)
\end{equation}
with $\lt$ an attack relation in which singletons attack each other and $\set{3}\lt \set{1,2}$, $\set{1}\lt \set{2,3}$, $\set{2}\lt \set{1,3}$ and $\set{1,3}\lt \set{2}$, and the empty set is attacked by any sets in $\tau$, as shown in Figure \ref{Counterexample to CUC}.\footnote{Attack edges involving the empty set are not drawn.} Thus, according to the definition, $\LFP_\tau$ is $\set{ \set{1,2}, \set{2,3}, \set{1,2,3} }$, a set that is not closed under intersection.

\begin{figure}[ht]
  \centering
\definecolor{qqqqff}{rgb}{0.,0.,1.}
\definecolor{cqcqcq}{rgb}{0.7529411764705882,0.7529411764705882,0.7529411764705882}
\begin{tikzpicture}[line cap=round,line join=round,>=latex,x=1.0cm,y=1.0cm, scale = 0.6]
\draw [rotate around={0.:(2.,-1.)}] (2.,-1.) ellipse (4.060788100849396cm and 0.7cm);
\draw [rotate around={58.183668383560516:(0.32605438329016084,1.3019196162848687)}] (0.32605438329016084,1.3019196162848687) ellipse (3.359184291413122cm and 1.0965342762477635cm);
\draw [rotate around={-58.18366838356031:(3.67394561670985,1.3019196162848716)}] (3.67394561670985,1.3019196162848716) ellipse (3.359184291413127cm and 1.0965342762477728cm);
\draw [->,line width=0.4pt] (1.9,3.5) -- (-0.85,-0.7);
\draw [->,line width=0.4pt] (-0.85,-0.7) -- (1.9,3.5);
\draw [->,line width=0.4pt] (2.15,3.5) -- (4.8,-0.8);
\draw [->,line width=0.4pt] (4.8,-0.8) -- (2.15,3.5);
\draw [->,line width=0.4pt] (-0.5,-1) -- (4.5,-1);
\draw [->,line width=0.4pt] (4.5,-1.) -- (-0.5,-1.);
\draw [->,line width=0.4pt] (2.,-0.3) -- (2,3.5);
\draw [->] (2.,3.5) -- (2.,-0.3);
\draw [->,line width=0.4pt] (1.4,0.96) -- (4.5,-0.8);
\draw [->] (2.6,0.96) -- (-0.6,-0.8);

\begin{scriptsize}
\draw[color=black] (0.2168514413918739,-2.195804513133274) node {$\{1,3\}$};
\draw[color=black] (-1.4031485586081252,1.8841954868667292) node {$\{1,2\}$};
\draw[color=black] (4.896851441391871,3.06419548686673) node {$\{2,3\}$};
\draw [fill=qqqqff] (2.0241241686646023,3.731716148023746) circle (2.5pt);
\draw [fill=qqqqff] (-1.,-1.) circle (2.5pt);
\draw [fill=qqqqff] (5.,-1.) circle (2.5pt);
\end{scriptsize}
\end{tikzpicture}
  \caption{Grounded beliefs are not closed under conjunction.}
  \label{Counterexample to CUC}
\end{figure}
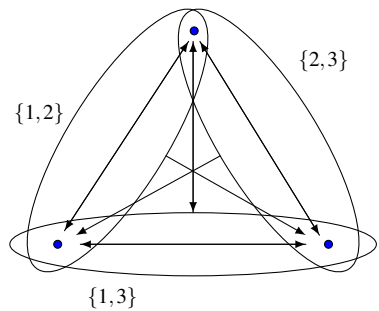


\shi{To get a better grasp of the whole framework, consider the following scenario which can be modeled by the above topological argumentation model.} 

\begin{example}
  \shi{The zoo in Tom's town bought a new animal and had it under exhibition. Tom was curious about what the animal is, so he asked his colleagues. However, he got different answers from them. Some told him that the animal was a penguin ($\set{1}$), some told him that the animal was a pterosaur ($\set{2}$) and some told him that the animal was a bat ($\set{3}$). Moreover, two other colleagues he really trusts told him that the animal could fly ($\set{2,3}$) and the animal was not a mammal ($\set{1,2}$). After receiving all these pieces of information, Tom was very puzzled. After all, it was very hard to imagine that there could be a pterosaur (although genetic technology has developed fast). So in such a situation, Tom believed that the animal could fly and the animal was not a mammal, but he did not believe that the animal was a pterosaur.}
\end{example}

The lack of closure under conjunction does not indicate that our agent lacks formal reasoning abilities, as is usually the case in weaker epistemic notions. Indeed, as the topology shows, our agent can put her pieces of evidence together in a meaningful way. In section \ref{sec:failclosure}, we further elaborate on this distinction between the agent's reasoning abilities and the closure properties of her grounded beliefs. For now we focus first on the reasons behind the failure of this closure property, by identifying additional conditions under which the property holds.

{\medskip}

\begin{pro}\label{pro:clointer}
  Let $\mathcal{M}$ be a topological argumentation model and $\mathcal{A}_\tau = (\tau,\lt)$ its attack graph.
  \begin{itemize}
  \item If $\lt$ is transitive (i.e. for any $t_1,t_2,t_3\in \tau$, $t_1\lt t_2$ and $t_2\lt t_3$ imply $t_1\lt t_3$), then $\LFP_\tau$ is closed under intersections;
  \item If $\lt$ is unambiguous (i.e. for any $t_1,t_2,t_3\in \tau$, if $t_1 \lt t_2$ and $t_2\lt t_3$, then $t_1\not\lt t_3$ and $t_3\not\lt t_1$), then $\LFP_\tau$ is closed under intersections;
  \item If $\lt$ is symmetric (i.e. for any $t_1,t_2\in \tau$,  if $t_1\lt t_2$ then $t_2\lt t_1$), then $\LFP_\tau$ is closed under intersections.
  \end{itemize}
\end{pro}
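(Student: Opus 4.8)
The plan is to prove all three implications through a common reduction and then to split on the hypothesis at the single hard point. Fix $f_1,f_2\in\LFP_\tau$. Since $\tau$ is closed under finite intersections, $f_1\cap f_2\in\tau$, and since $\LFP_\tau$ is conflict-free the first defining condition of Definition~\ref{def:arg_model} forces $f_1\cap f_2\neq\emptyset$ (otherwise $f_1\lt f_2$ or $f_2\lt f_1$ would be an attack inside $\LFP_\tau$). Because $\LFP_\tau=d_\tau(\LFP_\tau)$, membership in $\LFP_\tau$ is the same as being defended by $\LFP_\tau$, so it suffices to fix an arbitrary open $g$ attacking $f_1\cap f_2$ (i.e.\ $(f_1\cap f_2)\lt g$, whence $f_1\cap f_2\cap g=\emptyset$; by the third condition a nonempty open is not attacked by $\emptyset$, so $g\neq\emptyset$) and to produce a counter-attacker in $\LFP_\tau$. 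If $f_1\cap g=\emptyset$ (symmetrically if $f_2\cap g=\emptyset$), then $f_1$ and $g$ conflict, so by the first condition either $f_1$ attacks $g$, and $f_1\in\LFP_\tau$ is the desired counter-attacker, or $g$ attacks $f_1$, and the defence of $f_1$ (from $f_1\in d_\tau(\LFP_\tau)$) supplies some $s\in\LFP_\tau$ attacking $g$. This ``case~(i)'' uses none of the three hypotheses. The whole difficulty is the \emph{straddling case}~(ii): $f_1\cap g\neq\emptyset$ and $f_2\cap g\neq\emptyset$ while $f_1\cap f_2\cap g=\emptyset$; here $g$ meets each $f_i$ but not their intersection, so it attacks neither and the obvious candidates fail. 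This is the main obstacle, and the key observation is that in case~(ii) the opens $f_1\cap g$, $f_2\cap g$ and $f_1\cap f_2$ are nonempty and \emph{pairwise disjoint}.

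For the \emph{unambiguous} hypothesis I would simply show this configuration cannot exist: unambiguity forbids three pairwise disjoint nonempty opens. Given such $u_1,u_2,u_3$, each pair conflicts, so by the first condition each pair carries at least one attack; orienting the resulting conflict-triangle by $\lt$, some chain $u_i\lt u_j\lt u_k$ with $u_i\neq u_k$ must appear (a triangle is an odd cycle, so its vertices cannot all be pure ``sources'' or ``sinks''), and then unambiguity yields $u_i\not\lt u_k$ and $u_k\not\lt u_i$, contradicting that the disjoint $u_i,u_k$ must attack one another. Applying this to $u_1=f_1\cap g$, $u_2=f_2\cap g$, $u_3=f_1\cap f_2$ rules out case~(ii); hence every attacker of $f_1\cap f_2$ is of type~(i) and is counter-attacked, so $f_1\cap f_2\in\LFP_\tau$.

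For the \emph{transitive} hypothesis I would first record the lemma that under transitivity every $f\in\LFP_\tau$ is \emph{unattacked}: if $f\lt u$, defence gives $s\in\LFP_\tau$ with $u\lt s$, and transitivity then yields $f\lt s$, an attack inside $\LFP_\tau$, contradicting conflict-freeness. With this, case~(ii) is again impossible. Since $f_1$ is unattacked and $f_2\cap g$ conflicts with it, the first condition forces $f_1$ to attack $f_2\cap g$; symmetrically $f_2$ attacks $f_1\cap g$. As $f_1\cap g$ and $f_2\cap g$ are disjoint they conflict, and composing whichever direction of that attack holds with the matching one of the two attacks just found gives, by transitivity, $(f_1\cap g)\lt f_1$ or $(f_2\cap g)\lt f_2$; by the first condition this forces $f_1\cap g=\emptyset$ or $f_2\cap g=\emptyset$, contradicting case~(ii). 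So only case~(i) survives and $f_1\cap f_2\in\LFP_\tau$.

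The \emph{symmetric} hypothesis is genuinely different and I expect it to be the hardest: there attack coincides with conflict, members of $\LFP_\tau$ may be attacked, case~(ii) really can occur, and neither shortcut above applies. Here I would argue by transfinite induction on the stage $\mathrm{rk}(f_1)$ at which $f_1$ enters the least fixed point (recall $\LFP_\tau=\bigcup_\alpha d_\tau^{\alpha}(\emptyset)$, with each member already defended by the family $G_{<\mathrm{rk}(f_1)}$ of earlier stages), proving that $f_1\cap f_2\in\LFP_\tau$ for all $f_2\in\LFP_\tau$. In case~(ii) symmetry turns the conflict between $f_2\cap g$ and $f_1$ into an attack on $f_1$, so its defence yields some $w\in G_{<\mathrm{rk}(f_1)}$ with $w\cap f_2\cap g=\emptyset$. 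Since $w,f_2\in\LFP_\tau$ do not conflict, $w\cap f_2\neq\emptyset$, and as $\mathrm{rk}(w)<\mathrm{rk}(f_1)$ the induction hypothesis gives $w\cap f_2\in\LFP_\tau$; but $w\cap f_2$ is disjoint from $g$, so by symmetry it attacks $g$ and is the counter-attacker sought. Hence $f_1\cap f_2$ is defended and lies in $\LFP_\tau$, closing the induction. Throughout, the three hypotheses are invoked only on the nonempty opens arising in cases~(i) and~(ii), which is the only place they are needed.
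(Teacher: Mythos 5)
Your proof is correct. The overall architecture --- reduce to showing that $f_1\cap f_2$ is defended by $\LFP_\tau$, split an arbitrary attacker $g$ into the case where $g$ misses some $f_i$ entirely and the straddling case, and exploit the nonempty pairwise disjoint triple $f_1\cap g$, $f_2\cap g$, $f_1\cap f_2$ --- is a direct-proof reformulation of what the paper does contrapositively via its Lemma~\ref{lem:conjunctionequi} (there the triple is $f_1\cap t$, $f_2\cap t$, $f_1\cap f_2$ for an attacker $t$ consistent with all of $\LFP_\tau$), and your triangle argument for the unambiguous bullet is essentially the paper's case analysis, made cleaner by the observation that any conflict-triangle contains a directed two-chain. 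Two places genuinely diverge. For transitivity, your lemma that every member of $\LFP_\tau$ is unattacked (an attack $f\lt u$ plus defence $u\lt s$ plus transitivity yields $f\lt s$ inside the conflict-free $\LFP_\tau$) is a stronger and tidier pivot than the paper's contrapositive case analysis, and it simultaneously trivializes your case (i). For symmetry the two proofs are entirely different: the paper disposes of it in one line by observing $\LFP_\tau=J_\mathcal{M}=\{t\in\tau\mid\forall e\in E: e\cap t\neq\emptyset\}$ and that this set is closed under intersection (an identification asserted without proof, resting on the basis structure of $\tau$ and Proposition 2 of Baltag et al.), whereas you run a transfinite induction on the stage $\mathrm{rk}(f_1)$ at which $f_1$ enters the approximation hierarchy of the least fixed point, producing the counter-attacker $w\cap f_2$ via the induction hypothesis at lower rank. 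Your route is heavier machinery but self-contained, and it supplies exactly the detail the paper's ``Observe that\dots'' elides; the paper's route is shorter but imports the justification-set characterization. One cosmetic point in your induction: at rank $1$ the defending family $G_{<\mathrm{rk}(f_1)}$ is empty, so in the straddling case the step ``defence yields some $w$'' should be read as deriving a contradiction showing that case cannot occur there (since symmetry makes $f_2\cap g$ an attacker of the unattacked $f_1$); this deserves a sentence in a final write-up but is not a gap.
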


Note here how when $\lt$ is symmetric, our notion of belief boils down to the evidence-based belief in topological evidence models of \cite{Baltag2016}, and thus it is closed under conjunction. We will elaborate on this point below. With respect to transitivity, one may wonder whether closure under conjunction still holds under a weaker requirement that asks for the transitivity of the attack relation only when the involved sets are in conflict (i.e., for any $t_1,t_2,t_3\in \tau$, if $t_1\lt t_2$, $t_2\lt t_3$ and $t_1\cap t_3 = \emptyset$, then $t_1\lt t_3$). We defer the answer to this question till Section \ref{sec:failclosure}, where we compare our concept of grounded belief with a probabilistic notion of belief.

{\medskip}

We conclude this section with a comparison between our concept of grounded beliefs and the ev\-i\-dence-based beliefs under the topological semantics of \cite{Baltag2016}. Recall that, in the latter, the agent believes $P$ (denoted by $\Bel P$) if and only if for any $t \in \tau \setminus \set{\emptyset}$ there is $t' \in \tau \setminus \set{\emptyset}$ such that $t' \subseteq t$ and $t' \subseteq P$ (Definition \ref{def:topo_model_belief}). Moreover,

\begin{dfn}[Family of Bodies of evidence, Combined Evidence]
  Following \cite{Baltag2016},
  \begin{itemize}
    \item let $\mathcal{F}$ denote the \emph{family of all bodies of evidence} (see Definition \ref{def:body_evidence});

    \item let $\mathcal{F}^{\mathit{finite}}$ denote the family of all \emph{finite} bodies of evidence;

    \item define $E$ as the family of the combined evidence given by finite bodies of evidence:
    \[ E := \{\bigcap F \subseteq X \mid F \in \mathcal{F}^{\mathit{finite}}\}.\]
  \end{itemize}
\end{dfn}

According to \cite[Proposition 2]{Baltag2016}, evidence-based beliefs can be equivalently defined as $\Bel P$ if and only if there is an open that supports $P$ and is also consistent with any combined evidence given by finite bodies of evidence (i.e., if and only if there is $t \in \tau$ such that $t \subseteq P$ and $t \cap e \neq \emptyset$ for any $e \in E$).

Now, given a topological argumentation model $\mathcal{M}$, let $J_\mathcal{M}$ denote the set of opens that are consistent with any combined evidence given by finite bodies, that is
\[
  J_\mathcal{M} := \{t \in \tau \mid \forall e \in E: e \cap t \neq \emptyset\}
  \qquad
  \footnote{In \cite{Baltag2016}, the elements of \ensuremath{J_\mathcal{M}} are called \emph{justifications}.}
\]
Note that if $\lt$ in the topological argumentation model $\mathcal{M}$ is symmetric, then $\LFP_\tau = J_\mathcal{M}$.

The following result shows that the notion of grounded belief is weaker than the evidence-based belief of \cite{Baltag2016}.

\begin{pro}\label{pro:just}
  Given any topological argumentation model $\mathcal{M}$, $J_\mathcal{M} \subseteq d_\tau(\emptyset)$.
\end{pro}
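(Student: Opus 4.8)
The plan is to first unpack what $d_\tau(\emptyset)$ actually is. Reading the defense condition with $T = \emptyset$: an open $t$ is defended by $\emptyset$ iff every open $t'$ attacking $t$ (i.e.\ every $t'$ with $t \lt t'$) is itself attacked by some member of $\emptyset$. Since $\emptyset$ has no members, this can hold only when $t$ has no attacker at all, so
\[ d_\tau(\emptyset) = \set{t \in \tau \mid \text{there is no } t' \in \tau \text{ with } t \lt t'}, \]
the set of \emph{unattacked} opens. Thus the inclusion $J_\mathcal{M} \subseteq d_\tau(\emptyset)$ reduces to showing that no element of $J_\mathcal{M}$ is attacked.

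Before the main argument I would record two auxiliary facts. First, every $t \in J_\mathcal{M}$ is nonempty: since $X \in E$ (so $E \neq \emptyset$) while $e \cap \emptyset = \emptyset$ for any $e$, the empty set cannot satisfy the defining condition of $J_\mathcal{M}$. Second, the family $E$ is a basis for $\tau$: each $e \in E$ is a nonempty finite intersection of pieces of evidence, and such nonempty finite intersections of a subbasis generate $\tau_{E_0}$; consequently every nonempty open set contains some $e \in E$. This second fact is what lets the argument reach into $E$, since $J_\mathcal{M}$ only quantifies over combined evidence, not over arbitrary opens.

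For the main step I would take an arbitrary $t \in J_\mathcal{M}$ and suppose, toward a contradiction, that $t$ is attacked, i.e.\ that there is $t' \in \tau$ with $t \lt t'$. By condition 1 of Definition \ref{def:arg_model} (attack implies conflict), $t \cap t' = \emptyset$. Since $t$ is nonempty and the empty set attacks only itself (condition 3), the attacker $t'$ must itself be nonempty. By the basis property, $t'$ contains some $e \in E$, whence $t \cap e \subseteq t \cap t' = \emptyset$. This exhibits an $e \in E$ with $t \cap e = \emptyset$, contradicting $t \in J_\mathcal{M}$. Hence $t$ has no attacker, i.e.\ $t \in d_\tau(\emptyset)$, and the inclusion follows.

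The only delicate point is the translation step that turns ``$t$ is attacked by some open $t'$'' into ``$t$ conflicts with some combined evidence $e \in E$''. This is where conditions 1 and 3 of the attack relation and the basis property of $E$ do the real work; in particular one must first check that the attacker $t'$ is genuinely nonempty so that the basis property can be applied. Everything else is routine bookkeeping, and I expect no serious obstacle beyond this translation.
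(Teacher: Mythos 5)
Your proof is correct and follows essentially the same route as the paper's: the paper also observes that consistency with every $e \in E$ forces consistency with every open (implicitly via the basis property), so that by attack-implies-conflict no open attacks $t$, making $t$ vacuously defended by $\emptyset$. You merely run the argument contrapositively and make explicit two steps the paper leaves tacit---that $E$ is a basis so every nonempty open contains some $e \in E$, and that elements of $J_\mathcal{M}$ are nonempty so the attacker cannot be $\emptyset$ (condition 3)---which is careful bookkeeping, not a different approach.
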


It follows from this proposition that $J_\mathcal{M}\subseteq \LFP_\tau$, and thus $\Bel P$ implies $\B P$ in any given topological argumentation model. Therefore, grounded beliefs are weaker in the sense of being less restrictive and therefore allow more formulas to be believed by the agent. In the next section we study the logic of beliefs grounded on arguments.

\section{The Logic of Beliefs Grounded on Arguments (ABBL)}\label{sec:ABBL}

This section studies the logic of the notion of belief grounded on arguments. In order to provide a complete axiomatic characterization of the logic of grounded belief, we work for now with a language that extends the propositional part with a belief operator.

\begin{dfn}[Language]
  The language $\mathcal{L}$ of ABBL is generated by the following grammar, where $p\in \mathsf{Prop}$.
  \[ \varphi ::= p \mid \neg \varphi\mid \varphi\wedge\varphi \mid \B \varphi \]
\end{dfn}

\begin{dfn}[Semantics]
  Given a topological argumentation model $\mathcal{M} = (X, E_0, \tau_{E_0}, \lt, V)$ and a possible world $x \in X$, the truth condition of formulas in $\mathcal{L}$ is defined as usual for atomic propositions and Boolean operators. For the grounded belief modality,
  \begin{center}
    \begin{tabular}{l@{\quad{iff}\quad}l}
      $\mathcal{M},x \models \B \varphi$ & there exists $f \in \LFP_\tau$ such that $f \subseteq \br{\varphi}_\mathcal{M}$
    \end{tabular}
  \end{center}
  with $\br{\varphi}_\mathcal{M} := \set{x \in X\mid \mathcal{M},x \models \varphi}$ the set of $\varphi$-worlds in $\mathcal{M}$.
\end{dfn}


\ourparagraph{Axiom System} Table \ref{tbl:axioms} shows the axiom system ABBS, which below (Theorem \ref{thm:completeness}) will be shown to be sound and complete for the language ABBL with respect to topological argumentation models.

\begin{table}[h!]
  \centering
  \caption{Axiom system ABBS for ABBL}
  \begin{tabular}{r@{\;\;}l@{\qquad}r@{\;\;}l}
    \toprule
    \multicolumn{4}{l}{Propositional Tautologies and Modus Ponens}\\
    4:  & $\B\varphi\rightarrow \B \B\varphi$                                           & 5: & $\neg \B\varphi\rightarrow \B \neg \B\varphi$\\
    RE: & from $\varphi\leftrightarrow \psi$ infer $\B\varphi\leftrightarrow \B\psi$ & D: & $\B\varphi\rightarrow \neg \B\neg \varphi$\\
    M:  & $\B(\varphi\wedge\psi)\rightarrow \B\varphi\wedge \B\psi$                     & N: & $\B\top$\\
    \bottomrule
  \end{tabular}
  \label{tbl:axioms}
\end{table}

The axiom system ABBS extends the well-known axiom system EMN (the propositional fragment, the RE rule and axioms M and N) with axioms 4, 5 and D. It is known that EMN is sound and complete with respect to the class of neighborhood models that are \emph{supplemented} (i.e., closed under conjunction elimination or, equivalently, closed under supersets) and contain the unit (i.e., the domain is in the neighborhood) \cite{chellas:modal:1980}. This suggest a detour for proving the completeness result for our logic with respect to topological argumentation models.

\begin{theorem}\label{thm:completeness}
  For any $\varphi\in \mathcal{L}$ and $\Phi\subseteq \mathcal{L}$,
  \[ \Phi \vdash_{ABBS} \varphi \qquad\text{if and only if}\qquad \Phi \models \varphi \]
  where $\Phi\vdash_{ABBS} \varphi$ indicates that $\varphi$ is derivable from $\Phi$ in the axiom system ABBS and $\Phi \models \varphi$ indicates that, for any topological argumentation model $\mathcal{M}$ and any $x\in X$, if $\mathcal{M},x \models \Phi$ (i.e. $\mathcal{M},x \models \phi$ for all $\phi \in \Phi$) then $\mathcal{M},x\models \varphi$.
\end{theorem}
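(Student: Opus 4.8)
\textit{Plan.} I would prove the two directions separately, the soundness direction by a direct check and the completeness direction by the detour through neighborhood models flagged right after Table~\ref{tbl:axioms}.

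For soundness ($\vdash_{ABBS}\varphi$ implies $\models\varphi$) I would verify that each axiom is valid and each rule preserves validity on topological argumentation models. The organising observation is that $\br{\B\varphi}_\mathcal{M}$ does not depend on the evaluation world, since the grounded-belief clause refers only to $\LFP_\tau$; hence $\br{\B\varphi}_\mathcal{M}\in\set{\emptyset,X}$. With this in hand, RE is immediate (provable equivalents denote the same set), M is exactly the upward closure of $\LFP_\tau$ from Proposition~\ref{pro:LFPupwards}, N holds because $X\in\LFP_\tau$, and D holds because $\LFP_\tau$ is conflict-free (two disjoint witnesses would be in conflict, hence attack one another, contradicting conflict-freeness). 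Finally 4 and 5 follow from world-independence together with $X\in\LFP_\tau$: when $\B\varphi$ is true it is true everywhere, so $\br{\B\varphi}=X\in\LFP_\tau$ witnesses $\B\B\varphi$, and symmetrically for $\neg\B\varphi$ and axiom 5.

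For completeness I would start from a maximal ABBS-consistent set $w_0\supseteq\Phi\cup\set{\neg\varphi}$ and build a canonical neighborhood model whose worlds are the maximal consistent sets and whose supplemented neighbourhood is generated by the proof sets of believed formulas. Since ABBS extends EMN, the standard machinery \cite{chellas:modal:1980} makes this model supplemented, unital, and truth-lemma-satisfying, while axiom D pushes $\emptyset$ out of every neighbourhood. The one mismatch with topological argumentation models is that the latter have a \emph{uniform} (world-independent) belief family, so I would restrict to the cluster $D$ of maximal consistent sets sharing $w_0$'s belief set and put on $D$ the single family $\mathcal{N}$ generated by $\set{\br{\psi}\cap D \mid \B\psi\in w_0}$. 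The delicate point here is the ``only if'' half of the modal truth lemma, where $\br{\psi}\cap D\subseteq\br{\varphi}\cap D$ must force $\B\varphi\in w_0$; this requires $D$ to realise every scenario $\psi\wedge\neg\varphi$ compatible with $w_0$'s beliefs, and it is precisely axioms 4 and 5 that secure this, since they make $w_0$'s modal theory introspectively closed so that its believed and disbelieved facts are themselves believed and hence consistent with the cluster. Granting this, $\mathcal{N}$ is upward-closed, contains the unit $D$, omits $\emptyset$, and—combining D with monotonicity—is pairwise intersecting.

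The final, and I expect the main, obstacle is the representation: realising $\mathcal{N}$ as a genuine grounded extension. I would give $D$ the discrete topology (generated by the singletons together with $D$) and define $\lt$ by letting every member of $\mathcal{N}$ attack each set disjoint from it, letting nonempty non-members attack only disjoint non-members, and letting $\emptyset$ be attacked by everything and attack nothing; one checks this meets the three conditions of Definition~\ref{def:arg_model}. Because $\mathcal{N}$ is pairwise intersecting no two members are disjoint, so every member is unattacked and lies in $d_\tau(\emptyset)\subseteq\LFP_\tau$; conversely any nonempty non-member $t$ is attacked by $D\setminus t$, which no member can counterattack (no member is a subset of $t$), so $\mathcal{N}$ is a fixed point of $d_\tau$ and therefore contains $\LFP_\tau$. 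Hence $\LFP_\tau=\mathcal{N}$, and the resulting model refutes $\varphi$ while satisfying $\Phi$ at $w_0$. The crucial insights are that D-plus-monotonicity is exactly pairwise-intersecting-ness (mirroring conflict-freeness of grounded extensions) and that 4 and 5 are what license the passage to a uniform model; the $\LFP_\tau=\mathcal{N}$ computation and the richness of $D$ are where the real work lies, and should the infinite cluster complicate the richness argument I would instead establish the finite model property by filtration and run the construction on a finite domain.
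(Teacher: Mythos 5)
Your soundness check and your final representation step are both fine --- the latter essentially reconstructs the paper's Lemma \ref{lem:modalequi} and Lemma \ref{lem:lfpnb} (your attack relation is extensionally the paper's $\lt_{N_{\NB}}$, and your fixed-point computation $\LFP_\tau=\mathcal{N}$ matches theirs, with $D\setminus t$ in place of their $s=\bigcup_{b}(b\setminus t)$). The genuine gap is exactly at the point you flagged and then waved through: the claim that axioms 4 and 5 make the cluster $D$ ``realise every scenario $\psi\wedge\neg\varphi$ compatible with $w_0$'s beliefs.'' Concretely, your truth lemma needs: if $\B\psi\in w_0$ and $\neg\B\varphi\in w_0$, then $\Lambda\cup\set{\psi,\neg\varphi}$ is ABBS-consistent, where $\Lambda$ is the set of belief literals in $w_0$. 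This fails. Take $\psi:=\B p\to q$ and $\varphi:=q$ in any $w_0$ containing $\B p$, $\B(\B p\to q)$ and $\neg\B q$: then $\Lambda\cup\set{\psi,\neg\varphi}\supseteq\set{\B p,\;\B p\to q,\;\neg q}$ is already propositionally inconsistent, so $\widehat{\psi}\cap D=\widehat{q}\cap D$ and your truth lemma would force $\B q\in w_0$, a contradiction. So you would have to show that $\B p\wedge\B(\B p\to q)\wedge\neg\B q$ is ABBS-inconsistent --- and it is not. Consider the (non-uniform) neighborhood model with $X=\set{1,2,3,4}$, $N(1)=N(2)=$ all supersets of $\set{1,2}$ or of $\set{2,3,4}$, $N(3)=N(4)=$ all supersets of $\set{3,4}$, $V(p)=\set{1,2}$, $V(q)=\set{2,3}$. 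Each $N(w)$ contains the unit, is superset-closed and complement-free, and one checks directly that for every $S$: if $S\in N(w)$ then $\set{v\mid S\in N(v)}\in N(w)$, and if $S\notin N(w)$ then $\set{v\mid S\notin N(v)}\in N(w)$. Hence all of M, N, D, 4, 5 are valid on this frame, and frame-validity is closed under MP and RE, so \emph{every} ABBS theorem holds at world $1$; but at world $1$ we have $\br{\B p}=\set{1,2}\in N(1)$, $\br{\B p\to q}=\set{2,3,4}\in N(1)$, and $\br{q}=\set{2,3}\notin N(1)$. Introspection (4, 5) gives you $\B\B p$, $\B\neg\B q$, etc., but without any conjunction axiom a believed literal cannot be absorbed into another belief, which is what your richness claim silently requires; your filtration fallback does not help, since the obstacle is consistency, not infinity.

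Note where this leaves you relative to the paper. The formula $\B p\wedge\B(\B p\to q)\to\B q$ \emph{is} valid on all topological argumentation models, because grounded belief is world-independent: if $\B p$ holds anywhere it holds everywhere, so $\br{\B p\to q}=\br{q}$ and $\B(\B p\to q)$ collapses to $\B q$. Combined with the countermodel above, this formula is valid but not derivable (derivability from premises by modus ponens reduces to theoremhood via the deduction theorem, and RE may only be applied to theorems --- applying it to premise-dependent equivalences would be unsound for the stated local consequence). So no cluster-style canonical construction can close your gap without strengthening ABBS; the problem is with the target, not merely your route. The paper's own proof outsources precisely this step to the assertion that ABBS is complete for \emph{uniform} belief neighborhood models ``see, e.g., Chellas,'' but Chellas's completeness results concern non-uniform supplemented neighborhood models, and it is exactly uniformity that validates the formula above. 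In short: your attempt founders on the one step the paper does not actually prove, and your instinct that this is ``where the real work lies'' was right --- the work, as far as I can see, cannot be done for ABBS as given.
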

\begin{proof}[Proof sketch]
  Soundness is straightforward;
for completeness, the proof uses a modal equivalence result. First, define a \emph{belief neighborhood model} $\mathsf{M}$ as a uniform neighborhood model $(X,N_{\NB},V)$ where the neighborhood function $N_{\NB} \subseteq 2^X$ satisfies the following conditions:
  \begin{itemize}
    \item $X \in N_{\NB}$ ($N_{\NB}$ contains the unit);
    \item if $b \in N_{\NB}$, then $b'\in N_{\NB}$ for any $b'$ such that $b\subseteq b'$ ($N_{\NB}$ is closed under supersets);
    \item if $b \in N_{\NB}$, then $X\setminus b\notin N_{\NB}$ ($N_{\NB}$ does not contain the complement of any of its elements).
  \end{itemize}
  In such structures, the semantic interpretation of a modality for this \emph{neighborhood-based belief} is given by
  \begin{center}
    \begin{tabular}{l@{\quad{iff}\quad}l}
      $\mathsf{M}, x \models \NB \varphi$ & $\br{\varphi}_\mathsf{M} \in N_{\NB}$
    \end{tabular}
  \end{center}

  The following lemma states that every topological argumentation model gives raise to a modally equivalent belief neighborhood model, and vice-versa. Since the axiom system ABBS is sound and complete with respect to belief neighborhood models (see, e.g., \cite{chellas:modal:1980}), it follows that the system is also sound and complete with respect to topological argumentation models.

  \begin{lem}\label{lem:modalequi}
    For any topological argumentation model $\mathcal{M} = (X, E_0,$ $\tau, \lt,V)$ there is a belief neighborhood model $\mathsf{M} = (X,N_{\NB},V)$ with the same domain and atomic valuation such that $\mathcal{M}$ and $\mathsf{M}$ are point-wise modally equivalent with respect to the language $\mathcal{L}$ (i.e. for any $\varphi\in \mathcal{L}$ and any $x\in X$, $\mathcal{M},x\models \varphi$ if and only if $\mathsf{M},x\models \varphi$.) And vice versa.
  \end{lem}


  For showing modal equivalence, the key is the following lemma:

  \begin{lem}\label{lem:lfpnb}
    $\LFP_\tau = N_{\NB}$
  \end{lem}

   Hence, $\mathsf{M}$ and $\mathcal{M}$ are modally equivalent.
\end{proof}

Thus, our notion of grounded belief can be completely characterized by the axiom system ABBS.

\section{Failure of Closure and Rational Belief}\label{sec:failclosure}

One of the distinctive features of our notion of grounded belief is its lack of closure under conjunction. This feature is also a distinctive mark of the well-known \emph{probabilistic} notion of belief, where belief is interpreted as high enough probability (typically, above threshold $\frac{1}{2}$). It is thus worthwhile to compare the two approaches.

\begin{dfn}[Probabilistic Belief Model \cite{lenzen:glauben:1980,van:eijck:belief:2014}]
  A probabilistic belief model is a tuple $\mathbf{M} = (X, \mu, V)$ where $X$ is a countable space and $\mu: 2^X\rightarrow [0,1]$ is a probability function satisfying Kolmogorov's Axioms: \begin{inparaenum}[\itshape (1)] \item $\mu(Q) \geq 0$ for any $Q\subseteq X$, \item $\mu(X) = 1$, and \item for any countable sequence of disjoint sets $S_1,S_2,\ldots$, $\mu(\bigcup^\infty_{i=1} S_i) = \Sigma^\infty_{i=1}\mu(S_i)$\end{inparaenum}.
\end{dfn}

\noindent In this probabilistic belief model, $\mu$ is interpreted as agent's credence function which measures the agent's credence in each proposition. Hence credible enough propositions can be taken to represent the agent's beliefs, where ``credible enough" means ``above some threshold". In order to illustrate the relation between our concept of grounded belief and this quantitative counterpart, we consider the threshold to be $0.5$. Thus, the concept of \emph{probabilistic belief} that we consider in this paper is defined as
\begin{center}
\begin{tabular}{l@{\quad\text{iff}\quad}l}
$\mathbf{M},x\models \PB \varphi$ & $\mu(\br{\varphi}_\mathbf{M}) > 0.5$
\end{tabular}
\end{center}

It is important to note that the axiom system ABBS is sound with respect to the probabilistic belief model. Moreover, for any probabilistic belief model we can construct a modally equivalent belief neighborhood model (see the proof of Theorem \ref{thm:completeness}): it is enough to put in the neighbourhood $N_{\NB}$ all subsets of the domain whose probability is greater than $0.5$, i.e. $N_{\NB} = \set{S\subseteq X\mid \mu(S) > 0.5}$. Given Kolmogorov's axioms, the resulting structure indeed satisfies the belief neighborhood requirements. Even more: by Lemma \ref{lem:modalequi}, we can build a topological argumentation model from the belief neighborhood model, which is modally equivalent to the belief neighborhood model and thus modally equivalent to the probabilistic belief model. However, not every topological argumentation model has a modally equivalent probabilistic belief model. This is because the axiom system ABBS is sound but not complete with respect to the probabilistic belief model (c.f. \cite[Section 3.1]{klein:introspection:2015}, \cite{lenzen:glauben:1980} and \cite{van:eijck:belief:2014}).

Thus, probabilistic belief can be seen as a special form of grounded belief, in the sense that the former can be defined through a special attack relation on $2^X$: for any $e,e'\in 2^X$, define $e\lt_\mu e'$ if and only if $e\cap e' = \emptyset$ and $\mu(e) \leqslant \mu(e')$. This attack relation satisfies extra conditions, for example, for any $e_1,e_2,e_3\in 2^X$, if $e_1\lt e_2$, $e_2\lt e_3$ and $e_1\cap e_3 = \emptyset$, then $e_1\lt e_3$. Since the probabilistic notion of belief is not closed under conjunction either, this extra condition does not save the failure of closure under conjunction, which answers the question we posed in Section 2.


{\medskip}
\ourparagraph{Philosophical Debate.} Let us re-examine one of the most important criticisms faced by probabilistic beliefs from the perspective of our topological argumentation models. The criticism is based on the \emph{conjunction rule} which states that if it is `rational' for an agent to believe $\varphi$ and it is `rational' to believe $\psi$, then it is `rational' to believe $\varphi$ and $\psi$. As stated in \cite{foley:beliefs:2009},
\begin{center}
 \begin{tabular}{c}
   \begin{minipage}[t]{0.9\columnwidth}
     there are [\ldots] prima facie worries associated with rejecting the rule, the most fundamental of which is that if we are not required on pains of irrationality to believe the conjunction of propositions that we rationally believe, we might seem to lose some of our most powerful argumentative and deliberative tools. (p.42)
    \end{minipage}
  \end{tabular}
\end{center}

Thus, according to this (which \cite{foley:beliefs:2009} actually criticizes), a rational agent should follow the conjunction rule. Since probabilistic beliefs violate the rule, philosophers have debated whether probabilistic beliefs are good candidates for representing a rational concept of belief. In this discussion it is essential to note that if ``rational" is viewed as the willingness and capability of following logical rules, it implicitly burdens belief with a requirement about the agent's ability of deductive reasoning.\footnote{Our discussion does not exclude the possibility that `rationality' may well require more than simply reasoning in a logical way. But if extra constraints are imposed the formal setting is bound to change as well.} In \cite{foley:beliefs:2009}, the author argues not only that this worry is misplaced, but also that beliefs should not be burdened with the agents' deductive reasoning abilities. Note that Foley proposes a clear distinction between two reasoning processes, one involving an agent's beliefs (not closed under conjunction) and deductive processes (which are closed under conjunction). What our proposal shows is that the two processes are distinct but can be united in one formal framework.

{\smallskip}

The topological argumentation model shows how the notion of grounded belief is based on the agents' topology of evidence and the attack relation on the topology. So on the one hand, the agent modelled by a topological argumentation model is a powerful logician: she can generate a full topology from her collections of pieces of evidence. On the other hand, this logician' grounded belief does not follow the conjunction rule. So even if the agent has a good argument/reason to believe $\varphi$ and a good argument/reason to believe $\psi$, she may not have a good argument/reason to believe $\varphi \land \psi$. In contrast to many probabilistic belief models where these parts are hidden, our setting makes the different processes explicit.

\section{Conclusion and Future Work}\label{sec:conclusions}

This paper studies a notion of grounded belief. It does so by applying ideas from both abstract argumentation theory and the topological semantics for evidence.

From the perspective of abstract argumentation theory, our work can be seen as an attempt of giving structure to the otherwise abstract arguments in an attack graph. Here arguments are instantiated as sets of possible worlds, and thus we can use logical tools to define not only which propositions they support, but also when one of them attacks another. From the perspective of the topological semantics for evidence, our work provides a generalization of the conflict relation between pieces of evidence. The integration of these ingredients from two different fields brings to fruition a notion of \emph{belief grounded on arguments} whose logical behavior is proved to be characterized by the axiom system ABBS. This notion is compared with other forms of belief, including the evidence-based belief of \cite{Baltag2016} and the probabilistic notion of belief in \cite{lenzen:glauben:1980} and \cite{van:eijck:belief:2014}.

One of the most intriguing features of the introduced concept of grounded belief is its failure of closure under conjunction. In the philosophical literature, this failure is widely taken as a failure of rationality, especially when it comes down to the agent's beliefs. However, as our setting shows, even if the conjunction rule is adopted by the agent on the level of \emph{reasoning} about evidence or arguments, it does not need to be adopted to form new \emph{beliefs}.

{\medskip}

In future work we extend our formal language to include operators that address the agent's arguments explicitly. While our current restricted language allows us to characterize the behavior of our notion of belief, it does lack the expressive power to talk about the relationship between beliefs and the arguments supporting it. Another dimension worthwhile to explore further is the understanding of argumentation not as a single agent's inner reasoning mechanism, but rather as a social process that involves several agents. There exists work in argumentation theory dedicated to this dimension, as the study of methods for merging different attack graphs \cite{coste-marquis:merging:2007} shows. This line of work can further be related to the recent developments in social choice theory \cite{EndrissGrandiAIJ2017}. In our context, the question of merging different agent's attack graphs can shed new light at group-related notions, in particular, distributed belief, which can then be understood from the perspective of abstract argumentation theory.

Finally, while we have restricted our attention to the concept of ground\-ed belief, epistemologists will be interested to see a connection to the seminal studies of knowledge, belief, justifications and their relation (e.g., \cite{lehrerTheory1990}, where knowledge is understood as belief with a correct justification that cannot be defeated by any further true justification). There exist new proposals in this direction (e.g., \cite{Baltag2016}), which, with the help of argumentation theory can shed more light on different epistemic theories and, in particular, on the different notions of knowledge.

\bigskip

\noindent \textbf{Acknowledgments} We thank Johan van Benthem and Aby\"{u}ke \"{O}zg\"{u}n for feedback and discussions on the topic of this paper. We also thank the anonymous referees of TARK 2017 for their useful comments.

\appendix

\section*{Appendix}

\begin{ap}{Proof of Proposition \ref{pro:LFPupwards}}
  Take any $f \in \LFP_\tau$ and any $f' \in \tau$ such that $f \subseteq f'$. Suppose no one attacks $f'$; then we are done as, by $\LFP_\tau$'s definition, every non-attacked element of $\tau$ should be in $\LFP_\tau$. Suppose otherwise, and let $t$ be one of such opens attacking $f'$; it is enough to find a $f'' \in \LFP_\tau$ attacking $t$, as then $f'$ would be defended by someone in $\LFP_\tau$ and thus, by definition, $f'$ would be in $\LFP_\tau$. Now, since $t$ attacks $f'$, it should also attack the stronger $f$ (as required by $\lt$'s definition); but then, since $f$ is in $\LFP_\tau$, it should be defended by someone in $\LFP_\tau$, that is, there is a $f''$ in $\LFP_\tau$ attacking $t$. This completes the proof.
\end{ap}

\begin{ap}{Proof of Proposition \ref{pro:clointer}}
  The following lemma will be useful.

  \begin{lem}\label{lem:conjunctionequi}
    Let $\mathcal{M} = (X, E_0, \tau_{E_0}, \lt, V)$ be a topological argumentation model. Then, for any $f_1, f_2 \in \LFP_\tau$ , $f_1, f_2 \in \LFP_\tau$ implies $f_1 \cap f_2 \in \LFP_\tau$ if and only if for any $t\in \tau$, if $f_1\cap f_2 \lt t$, then $t\cap f = \emptyset$ for some $f\in \LFP_\tau$.
  \end{lem}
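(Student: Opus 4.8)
The plan is to exploit the defining fixed-point property $\LFP_\tau = d_\tau(\LFP_\tau)$, which means that for any open $g \in \tau$ one has $g \in \LFP_\tau$ if and only if $g$ is defended by $\LFP_\tau$. Since $\tau$ is a topology and hence closed under finite intersections, $f_1 \cap f_2 \in \tau$, so this criterion applies with $g := f_1 \cap f_2$. Thus the lemma reduces to showing that ``$f_1 \cap f_2$ is defended by $\LFP_\tau$'' is equivalent to the stated right-hand condition. The only gap to bridge is that ``defended'' is phrased in terms of \emph{attack} (every $t$ with $f_1 \cap f_2 \lt t$ is counter-attacked by some $f'' \in \LFP_\tau$, i.e.\ $t \lt f''$), whereas the right-hand side is phrased in terms of \emph{conflict} ($t \cap f = \emptyset$ for some $f \in \LFP_\tau$); condition (1) of Definition \ref{def:arg_model} together with the fixed-point property of $\LFP_\tau$ will let me pass between the two.

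For the forward direction, assume $f_1 \cap f_2 \in \LFP_\tau$. Then $f_1 \cap f_2 \in d_\tau(\LFP_\tau)$, so it is defended by $\LFP_\tau$. Given any $t$ with $f_1 \cap f_2 \lt t$, defense yields $f'' \in \LFP_\tau$ with $t \lt f''$, and condition (1) turns $t \lt f''$ into $t \cap f'' = \emptyset$. Taking $f := f''$ establishes the right-hand condition.

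The backward direction is where the real work lies. Assume the right-hand condition and take an arbitrary $t$ attacking $f_1 \cap f_2$, i.e.\ $f_1 \cap f_2 \lt t$. The hypothesis gives some $f \in \LFP_\tau$ with $t \cap f = \emptyset$, and condition (1) splits this conflict into two cases: $t \lt f$ or $f \lt t$. If $t \lt f$, then $f \in \LFP_\tau$ already counter-attacks $t$ and we are done. The delicate case is $f \lt t$, where the conflict ``points the wrong way'': here $t$ attacks $f$, but since $f \in \LFP_\tau = d_\tau(\LFP_\tau)$ is defended by $\LFP_\tau$, its attacker $t$ must itself be counter-attacked by some $f'' \in \LFP_\tau$, i.e.\ $t \lt f''$. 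In either case $t$ is counter-attacked by a member of $\LFP_\tau$; as $t$ was arbitrary, $f_1 \cap f_2$ is defended by $\LFP_\tau$, hence lies in $d_\tau(\LFP_\tau) = \LFP_\tau$.

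I expect the main obstacle to be exactly this $f \lt t$ case in the backward direction: the naive reading of the right-hand condition only produces a \emph{conflicting} open $f$, not necessarily one that \emph{attacks} $t$, and one must invoke the defended-ness of $\LFP_\tau$'s members to recover a genuine counter-attacker. Everything else is bookkeeping with condition (1). I would also remark that the argument never uses $f_1, f_2 \in \LFP_\tau$ as such — it establishes the equivalence for any $g \in \tau$ in place of $f_1 \cap f_2$ — so the membership hypotheses merely fix the context in which the lemma is later applied in the proof of Proposition \ref{pro:clointer}.
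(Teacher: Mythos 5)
Your proof is correct and takes essentially the same route as the paper's: both directions reduce to the fixed-point criterion $\LFP_\tau = d_\tau(\LFP_\tau)$, and your handling of the delicate $f \lt t$ case (using the defendedness of $f \in \LFP_\tau$ to extract a genuine counter-attacker $f''$) is exactly the paper's argument, with your forward direction merely phrased directly where the paper argues by contraposition. Your closing observation is also accurate: the paper's proof likewise never uses the hypothesis $f_1, f_2 \in \LFP_\tau$, so the equivalence holds for an arbitrary open in place of $f_1 \cap f_2$.
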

  \begin{proof}
  {\leftright} From left to right, take arbitrary $f_1,f_2\in \LFP_\tau$. Suppose there is an open $t \in \tau$ such that $t$ attacks $f_1\cap f_2$ but is not in conflict with anybody in $\LFP_\tau$. From the latter it follows that nobody in $\LFP_\tau$ attacks $t$, and thus the attacked $f_1 \cap f_2$ is not defended by $\LFP_\tau$; therefore, $f_1 \cap f_2$ is not in $\LFP_\tau$. {\rightleft} From right to left, take arbitrary $f_1,f_2\in \LFP_\tau$. If there is no $t\in \tau$ such that $f_1\cap f_2\lt t$, then we are done. Otherwise, take arbitrary $t\in \tau$ such that  $f_1\cap f_2\lt t$. It implies that there is $f'\in \LFP_\tau$ such that $t\cap f' = \emptyset$. Thus either $f'$ attacks $t$ or else $t$ attacks $f'$. The former case implies that there is $f''\in \LFP_\tau$ such that $t\lt f''$ by virtue of $f'$'s membership in $\LFP_\tau$; together with the latter case, i.e. $t\lt f'$, we can conclude that there is $f\in \LFP_\tau$ such that $t\lt f$. Hence for any $t\in \tau$ such that $f_1\cap f_2\lt t$, there is $f\in \LFP_\tau$ such that $t\lt f$, which implies that $f_1\cap f_2\in \LFP_\tau$.
  \end{proof}

  Now, for proposition 3.2. The proof for transitivity and the relation being unambiguous, proceeds by contraposition, so take any $f_1, f_2 \in \LFP_\tau$ such that $f_1 \cap f_2$ is not in $\LFP_\tau$. Then, by Lemma \ref{lem:conjunctionequi}, there is an open $t \in \tau$ who attacks $f_1 \cap f_2$ (i.e., $f_1 \cap f_2 \lt t$) and who is not in conflict with elements of $\LFP_\tau$ (i.e., $f \in \LFP_\tau$ implies $t \cap f \neq \emptyset$). The goal is to show that $\lt$ is not transitive.

  {\smallskip}

  Define $t_1, t_2$ and $t_3$ as
  \[ t_1 := f_1 \cap t, \qquad t_2 := f_2 \cap t, \qquad t_3 := f_1 \cap f_2, \]
  and note that none of them are empty. Note also that due to the fact that $t$ attacks $f_1 \cap f_2$ ($f_1 \cap f_2 \lt t$), $t$ must be in conflict with $f_\cap f_2$ ($(f_1\cap f_2)\cap t = \emptyset$); hence, $t_1 \cap t_2 = t_2 \cap t_3 = t_3 \cap t_1 = \emptyset$.

  {\smallskip}

  For transitivity, consider now the following two mutually exclusive and collectively exhaustive cases:\begin{inparaenum}[\itshape (1)] \item $t_1 \lt t_3$ or $t_2 \lt t_3$, \item $t_1 \not \lt t_3$ and $t_2 \not \lt t_3$\end{inparaenum}. In the first, assume the leftmost disjunct $t_1 \lt t_3$; that, together with $t_3 \lt t$ (recall: $t_3 = f_1 \cap f_2$) and $t_1 \not \lt t$ (as $t_1 \cap t = f_1 \cap t \neq \emptyset$) shows that $\lt$ is not transitive. If the rightmost disjunct $t_2 \lt t_3$ is assumed, a similar reasoning yields $t_3 \lt t$ and $t_2 \not \lt t$, and thus lack of transitivity again.

  Now the second case, which implies $t_3 \lt t_1$ and $t_3 \lt t_2$, as the $t_i$ are all in conflict. Since $t_1$ and $f_2$ are in conflict, at least one of them should attack the other. If $t_1 \lt f_2$, then this together with $t_3 \lt t_1$ and $t_3 \not\lt f_2$ (as $t_3 \cap t = f_1 \cap f_2 \neq \emptyset$, and thus they are not in conflict shows that $\lt$ is not transitive. Otherwise, $f_2 \lt t_1$ should be the case; hence, since $f_2$ is in $\LFP_\tau$, it should be defended from $t_1$ by some $f \in \LFP_\tau$, that is, $t_1 \lt f$. But then we have $f_2 \lt t_1$, $t_1 \lt f$ and $f_2 \not\lt f$ (since $\LFP_\tau$) is conflict-free. Thus, again, $\lt$ is not transitive.

  {\smallskip}

  For the relation being unambiguous, consider two cases, $t_1\lt t_2$ or $t_2\lt t_1$. In the case of $t_1\lt t_2$, if $t_2\lt t_3$, then no matter $t_1\lt t_3$ or $t_3\lt t_1$, it is not unambiguous. If $t_3\lt t_2$, then no matter $t_1\lt t_3$ or $t_3\lt t_1$, $\lt$ is not unambiguous. The proof for the case of $t_2\lt t_1$ follows a similar argument. Therefore, we reach the conclusion that $\lt$ cannot be unambiguous.

  {\smallskip}

  For symmetry, assume that $\lt$ is symmetric. Observe that $\LFP_\tau = \{t \in \tau \mid \forall e \in E: e \cap t \neq \emptyset\}$, which is closed under conjunction.
\end{ap}

\begin{ap}{Proof of Proposition \ref{pro:just}}
  Take any $t$ in $J_\mathcal{M}$. Since $t$ is consistent with any $e \in E$, it is also consistent with any open $t' \in \tau$, that is, for any such $t'$ we have $t \cap t' \neq \emptyset$. But then $t$ is not attacked at all, and thus it is defended by the empty set, i.e. $t \in d_\tau(\emptyset)$. Therefore, $J_\mathcal{M} \subseteq d_\tau(\emptyset)$.
\end{ap}

\begin{ap}{Proof of Lemma \ref{lem:modalequi}}
  Given a topological argumentation model, it is easy to build a point-wise modally equivalent belief neighborhood model: it is enough to define the neighborhood function $N_{\NB}$ as $\LFP_\tau$ plus all its element's supersets ($N_{\NB} := \set{ b \in 2^X \mid f \subseteq b \text{ for some } f \in \LFP_\tau }$). For the other direction, given a belief neighborhood model $\mathsf{M} = (X,N_{\NB},V)$, let $\mathcal{M}_\mathsf{M} = (X, E_0, \tau, \lt_{N_{\NB}}, V)$ be the topological argumentation model that shares domain and atomic valuation with $\mathsf{M}$, and in which the family of pieces of evidence is given by the singletons in $X$ ($E_0 := \set{ \set{x} \in 2^X \mid x \in X }$) and therefore the generated topology is the power set of the domain ($\tau = 2^X$). Moreover, define the attack relation $\lt_{N_{\NB}}$ as
  \[
    t \lt_{N_{\NB}} t'
      \quad\text{iff}\quad
      \left\{
        \begin{array}{ll}
          t \cap t' = \emptyset \text{ and } t \notin N_{\NB} & \text{when } t' \neq \emptyset \\
          t = \emptyset                                    & \text{when } t' = \emptyset\\

        \end{array}
      \right.
  \]
  for every $t, t' \in \tau$, so a non-empty $t'$ attacks a non-empty $t$ if and only if they are in conflict and $t$ is not in $N_{\NB}$, and while the empty set does not attack non-empty sets, it is attacked by everybody (including itself). It is easy to verify that this model is a topological argumentation model.
\end{ap}

\begin{ap}{Proof of Lemma \ref{lem:lfpnb}}
  {\prsupset} Take any $b \in N_{\NB}$. By $\lt_{N_{\NB}}$'s definition, no element of $\tau$ who is in conflict with $b$ attacks it (i.e., there is no $t \in \tau$ such that $b \cap t = \emptyset$ and $b \lt_{N_{\NB}} t$); therefore, no one attacks $b$ and hence $b \in \LFP_\tau$.

  {\prsubset} Take an arbitrary $t \in \tau$ such that $t \notin N_{\NB}$. If $t$ is not in conflict with any open in $\tau$, then $t$ must be $X$, so $X \not\in N_{\NB}$, contradicting the fact that $N_{\NB}$ contains the unit. Hence, $t$ should have a conflict with some open, that is, there is at least one $t' \in \tau$ such that $t \cap t'= \emptyset$. From this and $\lt_{N_{\NB}}$'s definition it follows that $t'$ attacks $t$, $t \lt_{N_{\NB}} t'$.

  If at least one of such $t'$ is in $N_{\NB}$, then it is also in $\LFP_\tau$ (the {\prsupset} case above), and thus the fact that $t'$ attacks $t$ implies that $t \not\in \LFP_\tau$, as this set is conflict-free. Then we are done. Otherwise no such $t'$ is in $N_{\NB}$, that is, every open in conflict with $t$ is outside $N_{\NB}$; in other words, no element of $N_{\NB}$ is in conflict with $t$ ($b \in N_{\NB}$ implies $t \cap b \neq \emptyset$).

  Define now the set $s$ as the union of the non-$t$ parts of every element of $N_{\NB}$, that is, $s := \bigcup_{b \in N_{\NB}} (b \setminus t)$. Since $\tau$ is the full power set, $s$ is in $\tau$. This $s$ is in conflict with $t$ ($s \cap t = \emptyset$, by construction), but it is consistent with all $b \in N_{\NB}$ ($b \cap s \neq \emptyset$ for all such $b$, as otherwise the $b' \in N_{\NB}$ in conflict with $s$ would have been a subset of $t$, and thus closure under supersets of $N_{\NB}$ would imply $t \in N_{\NB}$, contradicting our initial assumption). These two facts tells us that we have found an open $s \in \tau$ that attacks $t$ ($t \lt_{N_{\NB}} s$, from the first fact, $t \notin N_{\NB}$ and $\lt_{N_{\NB}}$'s definition) but it is not attacked by anyone in $N_{\NB}$ ($b \in N_{\NB}$ implies $s \not \lt_{N_{\NB}} b$, from the second fact and $\lt_{N_{\NB}}$'s definition). Moreover, it attacks every open not in $N_{\NB}$ with whom it has conflict ($x \notin N_{\NB}$ implies $x \cap s = \emptyset$, again from $\lt_{N_{\NB}}$'s definition). Thus, $s$\begin{inparaenum}[\itshape (1)] \item attacks everybody in $\tau$ that is not in $N_{\NB}$, and \item is not attacked by anybody in $N_{\NB}$\end{inparaenum}; hence no one outside $N_{\NB}$ can be protected by $N_{\NB}$ against $s$, which means that $d_\tau(N_{\NB}) = N_{\NB}$. Together with the fact that $N_{\NB}\subseteq \LFP_\tau$, it implies that $N_{\NB} = \LFP_\tau$. In particular, since $t \not\in N_{\NB}$, we get the required $t \not\in \LFP_\tau$.
\end{ap}

\bibliographystyle{eptcs}
\bibliography{argumentlogic}

\end{document}